\def\X{X}
\def\Xfree{\X_{\text{free}}}
\newtheorem{theorem}{Theorem}
\newtheorem{lemma}{Lemma}
\def\optimizer{\sigma}
\def\path{p}
\def\pathprime{\path^{\prime}}
\def\pp{\path^{\prime}}
\def\R{\mathbb{R}}
\def\i{(\textit{i})\xspace}
\def\ii{(\textit{ii})\xspace}
\def\iii{(\textit{iii})\xspace}
\def\optimizer{\sigma}
\def\radius{r_{\optimizer}}
\begin{document}

\title{Approximate Topological Optimization using Multi-Mode Estimation for Robot Motion Planning}


\author{Andreas Orthey$^{1}$
and Florian T. Pokorny$^{2}$ and Marc Toussaint$^{1,3}$%
}

\twocolumn[{%
\begin{@twocolumnfalse}
\centering
\maketitle
\def\hFrac{0.24}
\def\wFrac{0.24}
\includegraphics[width=\wFrac\textwidth,height=\hFrac\textwidth]{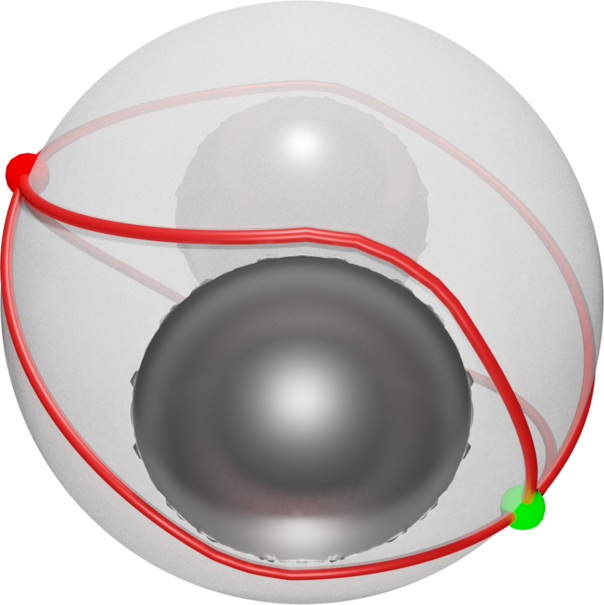}
\includegraphics[width=\wFrac\textwidth,height=0.2\textwidth]{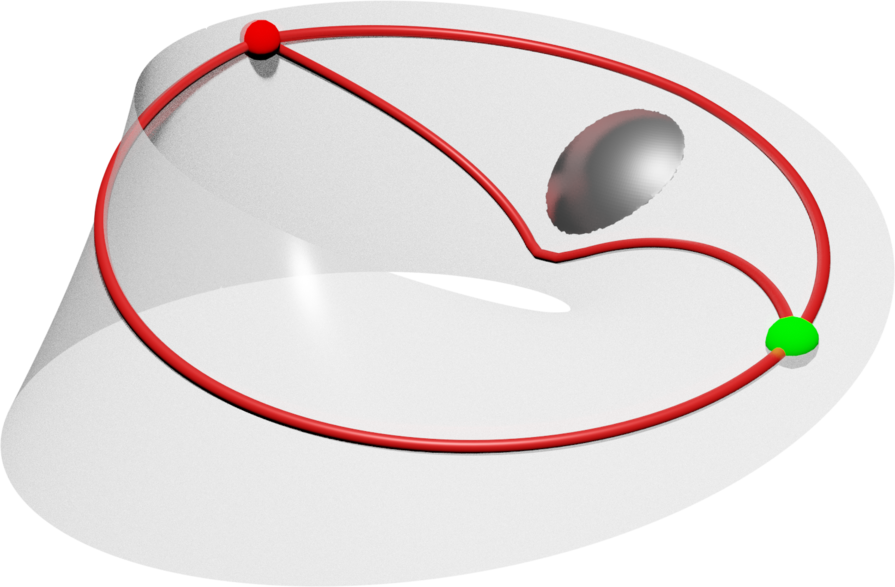}
\includegraphics[width=\wFrac\textwidth,height=\hFrac\textwidth]{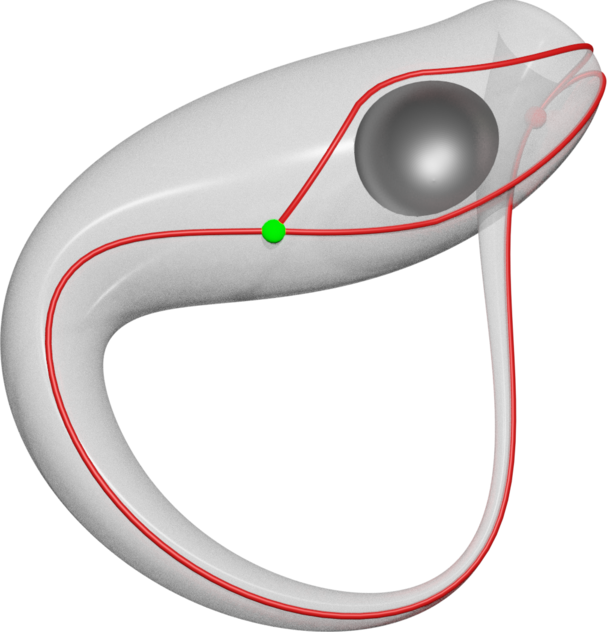}
\includegraphics[width=\wFrac\textwidth,height=\hFrac\textwidth]{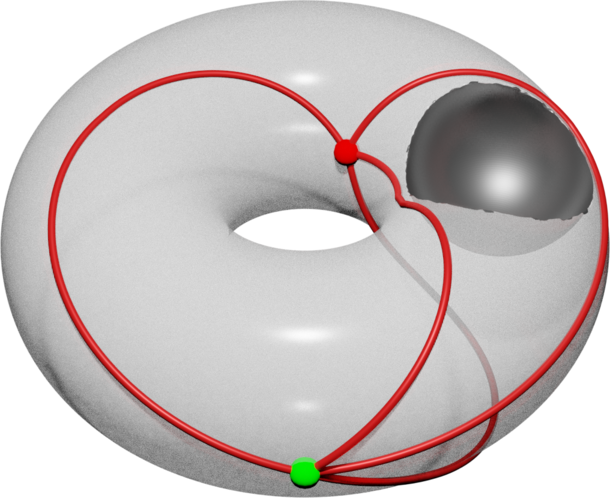}

\captionof{figure}{Initial test of multi-mode estimation on different state spaces. Problem is to move a point robot from a given start state (green) to a goal state (red) while optimizing a minimum-length cost functional. Local modes found after convergence are shown in red. \textbf{Left}: Sphere with two punctured holes (in grey) and four modes (in red). \textbf{Middle Left}: Punctured Mobius strip with three modes. \textbf{Middle Right}: Punctured Klein bottle with three modes. \textbf{Right}: Punctured torus with four modes.\label{fig:pullfigure}}
\end{@twocolumnfalse}
}]

{
  \footnotetext[1]{Max Planck Institute for Intelligent Systems, Stuttgart, Germany. Marc Toussaint thanks the MPI-IS for the Max Planck Fellowship.}%
  \footnotetext[2]{KTH Royal Institute of Technology, Stockholm, Sweden}
  \footnotetext[3]{Technical University of Berlin, Berlin, Germany. Emails:{\tt\footnotesize \{aorthey\}@is.mpg.de}, {\tt\footnotesize \{fpokorny\}@kth.se}, {\tt\footnotesize \{toussaint\}@tu-berlin.de}}
}

\begin{abstract}

In this extended abstract, we report on ongoing work towards an approximate multimodal optimization algorithm with asymptotic guarantees. Multimodal optimization is the problem of finding all local optimal solutions (modes) to a path optimization problem. This is important to compress path databases, as contingencies for replanning and as source of symbolic representations. Following ideas from Morse theory, we define modes as paths invariant under optimization of a cost functional. We develop a multi-mode estimation algorithm which approximately finds all modes of a given motion optimization problem and asymptotically converges. This is made possible by integrating sparse roadmaps with an existing single-mode optimization algorithm. Initial evaluation results show the multi-mode estimation algorithm as a promising direction to study path spaces from a topological point of view.

\end{abstract}

\IEEEpeerreviewmaketitle

\section{Introduction}

We develop a new multi-mode estimation algorithm to find modes. Modes are paths invariant under optimization of a cost functional. Estimating modes is important to provide completeness guarantees to optimization algorithms, as contingencies for rapid replanning \citep{Yang2010, Pall2018, Orthey2020WAFR}, and to 
provide admissible heuristics for more complex problems \citep{Vonasek2020, Orthey2020IJRR}. This is useful in multi-robot navigation \citep{Orthey2020WAFR, Mavrogiannis2020}, to investigate long-horizon planning problems \citep{Hartmann2021TRO}, to sparsify path databases \citep{Pokorny2016}, or as symbols for high-level planning \citep{Toussaint2018}.

However, the robotics community has thus far concentrated almost exclusively on single-mode optimization algorithms. Methods include CHOMP \cite{Zucker2013}, 
TrajOpt \cite{Schulman2014}, STOMP \cite{Kalakrishnan2011}, KOMO \cite{Toussaint2009, Toussaint2014}, Kernel Projection \cite{Marinho2016},
Bayesian Optimization \cite{Vien2018} or
Gaussian Process Planning \cite{Mukadam2018}. While those optimization methods find a \emph{single mode} from an existing solution, we like to leverage them for \emph{multi-mode} estimation.

Multi-mode estimation is closely related to topological optimization.
Topological concepts like homotopy \cite{Bhattacharya2012, Bhattacharya2018},
homology \cite{Pokorny2016, Pokorny2016ICRATaskProjections} or braids
\cite{mavrogiannis_2016, Mavrogiannis2019} have been successfully applied in
robotics. Our work differs by explicitly assuming that a single-mode
optimization algorithm is given. Motivated by Morse theory \cite{Morse1934}, we
concentrate on the problem of enumerating modes. While finding modes has been
studied extensively in (evolutionary) optimization \cite{Preuss2015}, in terms
of visualization \cite{Tierny2017} and in terms of mode-relationships
\cite{Ochoa2014, Treimun2020}, there are just a few works applying this concept
to path planning \cite{Jaillet2008, Rosmann2017, Osa2020, Vonasek2020}. Our work
is complementary, in that we do multi-mode estimation but provide asymptotic guarantees. 

While we previously tackled the same problem \cite{Orthey2020RAL,
Orthey2020WAFR}, we significantly improve upon this work. In particular, our
multi-mode estimation algorithm can enumerate the modes of a given optimal
planning problem while \i asymptotically converges to all modes (see proof in
Appendix~\ref{appendix:convergence}), \ii being anytime, and \iii can use
iterative optimizers. This is achieved by having an integrated 2-stage process
combining sparse roadmaps with a single-mode path optimization method.

\tikzset{
  boxed/.style={
    rectangle, 
    rounded corners, 
    draw=black, very thick,
    text width=8em, 
    minimum height=2.5em, 
    text centered},
  arrowed/.style={
   ->},
  shift left/.style ={commutative diagrams/shift left={#1}},
  shift right/.style={commutative diagrams/shift right={#1}}
}

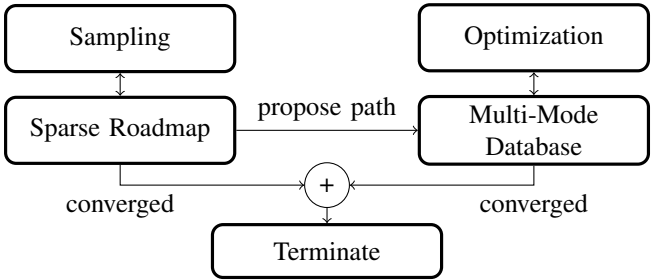
\begin{figure}
    \centering
\begin{tikzpicture}[node distance=5.5cm and 1cm]
    \node (B) [boxed] {Sparse Roadmap};
    \node (C) [boxed, right of=B] {Multi-Mode Database};
    \node (B1) [boxed, above=0.3cm of B] {Sampling};
    \node (C1) [boxed, above=0.3cm of C] {Optimization};
    \draw [<->] (C) -- (C1);
    \draw [<->] (B) -- (B1);
    \node (D1) at ($(B)!0.5!(C)$){};    
    \node (D) [draw, fill=white, circle,
below=0.3cm of D1] {+};
    \node (F) [boxed,minimum height=2em, 
below=0.2cm of D] {Terminate};
    \draw [->,yshift=0.2cm] (B) -- node[above]{propose path}(C);
    \draw [->] (B.south) |- node[below]{converged}(D);
    \draw [->] (C.south) |- node[below]{converged}(D);
    \draw [->] (D) -- (F);
\end{tikzpicture}    
\caption{Overview of multi-mode estimation. As input, we use a state space on which we grow a sparse roadmap \citep{Dobson2014}. New paths in the sparse roadmap are proposed to the multi-mode database for storage and optimization. We terminate if both the sparse roadmap and all database paths have converged.}
    \label{fig:systemfigure}\vspace{-3mm}
\end{figure}
\section{Approximate Topological Optimization}

We consider optimal motion planning problems of the form $(X, x_I, x_G, c, \optimizer)$ whereby $X$ is the state space, $x_I$ is an initial state, $x_G$ a goal state, $c$ is a cost functional of the form
\begin{equation}
    c(p) = \int_0^1 L(x,p(x),p'(x)) dx \label{eq:costfunctional}
\end{equation}
whereby $p: I \rightarrow \X$ is a path on the state space from $x_I$ to $x_G$
and $\optimizer$ is a path optimizer. We assume that $\optimizer$ is
deterministic, does not  increase path cost and asymptotically converges to a
fixed point path (See Appendix~\ref{appendix:convergence} for more details). Our
goal is to find the modes, i.e. the set of paths which are invariant under optimizer $\optimizer$ over the cost functional Eq.~\eqref{eq:costfunctional}.

\subsection{Overview of Multi-Mode Estimation}

Our general method is an integrated 2-stage process depicted in Fig.~\ref{fig:systemfigure}. We first grow a sparse roadmap \cite{Dobson2014} on the state space. Whenever we add a new edge to the roadmap, we check if this edge creates a new path from start to goal. This path is then added to the multi-mode database. Our algorithm terminates if no samples are added to the sparse roadmap for $M$ subsequent iterations \emph{and} if all paths in the database have asymptotically converged under optimizer $\optimizer$. This algorithm is anytime, i.e. it returns all current paths on premature termination.

\subsection{Sparse Roadmaps}

In the first step of our algorithm we grow a sparse roadmap on the state space. This method follows closely previous algorithms \cite{Simeon2000, Dobson2014} and uses the parameter $\Delta$ as the visibility radius to reject samples. Whenever we add a new edge to the sparse graph, we check if this edge adds a path from start to goal. This is done by first checking if the source vertex of the edge is in the same component as the start and the goal vertex of the graph. We then construct a path from start to the source vertex and another path from source vertex to the goal vertex. Both paths are then concatenated and send to the multi-mode estimation database. We say that the sparse roadmap is converged if $M$ subsequent infeasible samples have been drawn \cite{Simeon2000}.

\subsection{Multi-mode database}

The multi-mode database gets as input a stream of paths and adds those paths either to the database or uses them to update an existing path. In the first step, we apply the optimizer $\optimizer$ for one iteration. We then iterate through all database paths and check for path equivalence. Path equivalence is tested using the Hausdorff metric which is defined as
\begin{equation}
    d_H(p,\pp) = \sup_{s \in I} \inf \{ d(p(s), \pp(t)) \mid t \in I\},
\end{equation}
whereby $\sup$ is the supremum (least upper bound), $\inf$ is the infimum
(greatest lower bound) and $d$ is a metric on $\X$. We say that two paths $p,
\pp$ are equivalent if $d_H(p,\pp) < \epsilon$, whereby $\epsilon$ is a resolution parameter specific to the optimizer. If two paths are equivalent, we either update the existing path if its cost is lower or remove it. If no paths are equivalent, we add the new path to the database. After every iteration, we randomly pick a non-converged path, optimize it for one iteration, and check for path equivalence. This stage is said to be converged if all paths have converged. 

\section{Demonstrations}

As an initial test of the multi-mode estimation algorithm, we use four 2-d examples. In particular, we use punctured versions of the sphere, the Mobius strip, the Klein bottle and the torus (Fig.~\ref{fig:pullfigure}). Each of those state spaces has been implemented in the open motion planning library (OMPL) \cite{Sucan2012}, whereby we implement uniform sampling using curvature-based rejection sampling \cite{Williamson1987} and implement interpolation functions which take the gluing of the state space into account. 

Using parameters as detailed in Appendix~\ref{appendix:parameters}, we let the
multi-mode estimation run $10$ times and report on $t$, the time until
convergence in seconds, $m$, the average number of minima found, and $n$, the average
number of nodes in the sparse roadmap after convergence. For the Mobius strip,
we have $t =  8.75 \pm 1.24 $, $m= 3$ and $n= 78.3$. For the sphere, the results
are $t =  17.76 \pm 4.32 $, $m= 4$, $n= 42.1 $, for the torus $t =  16.39 \pm
1.52 $, $m= 5$, $n= 116.0$ and for the Klein bottle $t = 62.41 \pm 35.82 $,
$m=3.5$, $n= 97.2 $. The results indicate that the multi-modal estimation
algorithm robustly converges while having a low number of nodes in the sparse
roadmap. We observe that in the Klein bottle scenario, the optimizer converges
to $3$ modes in half of the cases, while converging to $4$ in the rest of them.
This is due to numerical instabilities in the optimizer, which needs to be
addressed in future work. However, our multi-mode estimation algorithm is
conservative, i.e. it reliably finds all existing modes in all cases. 

\section{Conclusion}

This extended abstract described a method to estimate modes of a robot motion
planning and optimization problem. To apply this methodology to more complex
problems, we further need to solve two problems. First, we need to scale the
algorithm up to high-dimensional systems. This can be achieved by aggregating
modes using multilevel abstractions \cite{Orthey2020WAFR, Orthey2020IJRR}. Second, while we
provided proof for asymptotic convergence in the Appendix, it is often difficult
to achieve this in practice due to jumping behavior in the optimizer. This could
potentially be alleviated by using more powerful optimizers with iteration step
guarantees. However, we believe multi-mode estimation to be a useful and
promising tool to study high-dimensional path spaces from a topological point of
view.

\bibliographystyle{plainnat}
\balance
\bibliography{bib/general}

\appendices

\balance
\section{Proof of Asymptotic Convergence\label{appendix:convergence}}

In this appendix, we prove that the
multi-mode estimation algorithm converges to all modes of a given optimization
problem, i.e. that it asymptotically converges. This result depends on the
parameters $M$ and $\Delta$ of the sparse roadmap. In particular, we prove that
multi-mode estimation converges with probability $1$ as $M$ approaches infinity
to all modes with a basin of attraction (neighborhood of paths converging to the
mode) of radius at least $\Delta$. 



\subsection{Notations and Assumptions}

Let $\X$ be the state space, a compact manifold (closed, bounded, locally
euclidean), $x_I \in \X$ be the start and $x_G \in \X$ be the goal state. Let us
further assume that we are given a distance function $d: \X \times \X
\rightarrow \R$ and a constraint function $\phi: \X \rightarrow \{0,1\}$, which
evaluates to zero if a state is constraint-free and to one otherwise (this
function encapsulates e.g. joint limits, self-collisions,
robot-environment-collisions or robot-robot-collisions). The constraint function
implicitly defines the free state space $\Xfree = \{\phi(x)=0\mid x \in \X\}$.
We then define the \emph{path space} $P$ to be $P=\Xfree^{I}\Bigr|_{x_I}^{x_G}$, i.e. the set of paths $p: I \rightarrow \Xfree$ which start at $x_I$ and end at $x_G$. 

Given a path space, we like to estimate modes. Recall that a mode is a path
invariant under \emph{optimization} of a \emph{cost functional}. We assume a
given cost functional $c(p)$ as introduced in Eq.~\eqref{eq:costfunctional}
which assigns a cost value to each path in our path space and we assume a path optimizer $\optimizer: P \rightarrow P$, which is a mapping taking as input a path and returning as output another path. We make additional assumptions on this optimizer. 
\begin{itemize}
    \item \textbf{Non-increasing cost}. If $c(p)$ is the cost of $p$, then $c(\optimizer(p)) \leq c(p)$.
    \item \textbf{Deterministic}. Any two applications of $\optimizer(p)$ will yield the same unique deterministic path.
    \item \textbf{Converges Asymptotically to a Fixed Point}. The path optimizer will eventually converge to a path, i.e. there exists an $N>0$ such that for any $n > N$ we have $\optimizer^n(p) = \optimizer^{n-1}(p)$ whereby $\optimizer^n$ is the $n$ times repeated application of the mapping $\optimizer$.
\end{itemize}

Note that we do not make additional assumptions such as continuity, i.e. that input and output are continuously deformable into each other (homotopic), or idempotence, i.e. that the optimizer converges after one iteration. While this slightly increases the complexity of our algorithm, it vastly increases the scope of single-mode optimizers we can use, such as shortcutting algorithms \cite{Sekhavat1998} or iterative gradient descent methods \cite{Toussaint2014}.

\subsection{Basin of Attraction}

Let $p$ be a mode of Eq.~\eqref{eq:costfunctional}. We define its \emph{basin of attraction} as the set of paths $\pathprime \in P$ such that there exists an $N>0$ such that for all $n > N$ we have $d_H(p,\optimizer^n(\pathprime)) \leq \epsilon$, i.e. $p$ is a fixed point for the optimizer $\optimizer$ when applied to path $\pathprime$. In this case, the mode is also called an attractor. 

For the theoretical treatment, it is important to quantify the size of the basin of attraction. We introduce $\radius(p)$, which we define as the \emph{radius of the basin of attraction} of mode $p$, which is the largest real number such that all paths $\pathprime$ of distance $d_H(p,\pathprime) \leq \radius(p)$ are inside the basin of attraction for optimizer $\optimizer$.

In practice, we might have to deal with optimizers which exhibit jumping-behavior, where it looks like the optimizer converged (i.e. we stay below $\epsilon$ for one iteration), while in reality the optimizer might have only made small steps and has in fact not yet converged. This can be dealt with by defining a parameter $N_E$, which is the number of steps during which the optimizer needs to stay below $\epsilon$ for us to consider the path to be converged to a fixed point. 

\subsection{Asymptotic Convergence}

We are ready to state our two theorems. The first theorem establishes that for
every mode $p$, there exists (in the limit) a path on the sparse roadmap which
has a Hausdorff distance to the mode which is upper bounded by the visibility
radius $\Delta$. In the second theorem, we then establish that all modes with a
radius of basin of attraction larger than the constant will eventually be found
with probability $1-\frac{1}{M}$. Before we state those theorem, we restate a
Lemma by Dobson and Bekris \cite{Dobson2014}, which will come in handy in our
proofs.

\begin{lemma}[Dobson and Bekris \cite{Dobson2014}]
Let $p$ be a path. Then there exists a connected series of vertices $V = \{v_1,\ldots,v_M\}$ on the sparse graph, with probability approaching $1$ as $M$ goes to infinity, such that every point $p(s)$ on the image of $p$ lies in the visibility region of at least one vertex in $V$.
\end{lemma}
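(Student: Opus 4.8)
The statement to prove is the Lemma restated above, due to Dobson and Bekris \cite{Dobson2014}, so the plan is to reconstruct its proof from the visibility property of the sparse roadmap together with a standard sampling argument. First I would exploit compactness: since $\X$ is a compact manifold and $p$ is continuous, $p(I)$ is compact and $p$ is uniformly continuous, so I can choose finitely many parameters $0 = s_0 < s_1 < \dots < s_k = 1$ with $d(p(s_{j-1}), p(s_j)) < \delta$, where $\delta$ is fixed as a suitable fraction of the visibility radius $\Delta$, and such that the balls $B_j = B(p(s_j), \delta)$ cover the image $p(I)$. This turns the continuous covering problem into a finite one.

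Next I would invoke the construction rule of the sparse roadmap \cite{Dobson2014, Simeon2000}: whenever a drawn sample is not within visibility radius $\Delta$ of any existing vertex, it is added as a new guard vertex. Hence, as soon as at least one sample has landed in $B_j$, there must exist a roadmap vertex $v_j$ with $d(v_j, p(s_j)) \le \Delta + \delta$ --- either the sample itself was promoted to a vertex, or it was already covered by an existing one. Choosing $\delta$ small enough and applying the triangle inequality along the sub-arc of $p$ carried by $B_j$, every point $p(s)$ on that sub-arc then lies in the visibility region of $v_j$. Consecutive centers satisfy $d(p(s_{j-1}), p(s_j)) < \delta$, so $v_{j-1}$ and $v_j$ are mutually visible and, by the edge-connection rule of the sparse roadmap, are joined by an edge; the collected vertices $v_1, \dots, v_k$ thus form a connected walk, which is the desired series $V$.

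It then remains to control the probability that every $B_j$ receives a sample. Since $p(I)$ lies in the free space and $\X$ is a manifold, each $B_j \cap \Xfree$ has strictly positive volume $\mu_j$; under i.i.d.\ sampling distributed uniformly with respect to the volume measure on $\Xfree$, the probability that $B_j$ is missed after $M$ draws is at most $(1 - \mu_j/\mu(\Xfree))^M$. A union bound over the finitely many indices $j \le k$ bounds the total failure probability by $\sum_{j \le k} (1 - \mu_j/\mu(\Xfree))^M$, which tends to $0$ as the roadmap grows, i.e.\ the good event has probability approaching $1$.

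The main obstacle I anticipate is not the probabilistic estimate but the geometric bookkeeping of the radii: the covering radius $\delta$, the visibility radius $\Delta$, and the edge-connection radius of the sparse roadmap must be chosen consistently so that simultaneously (a) every point of $p$ is inside the visibility region of some $v_j$, and (b) consecutive $v_j$ are close enough to be linked by a roadmap edge. Extracting a single clean constant may require refining the covering and carefully chaining the triangle inequalities; one also has to assume, as is implicit throughout, that $p$ has positive clearance so that the balls $B_j$ genuinely have positive measure in $\Xfree$. Once these constants are pinned down, the remaining steps are routine.
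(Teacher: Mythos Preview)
The paper does not prove this lemma at all: it is stated as a citation of Lemma~1 in Dobson and Bekris \cite{Dobson2014} and used as a black box for the two theorems that follow. Your proposal is therefore not comparable to the paper's ``proof'' --- you are reconstructing an argument that the authors deliberately outsource.

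As a reconstruction, the probabilistic half of your sketch (compactness, finite cover, union bound over exponentially small miss-probabilities) is sound and standard. The step that deserves more care is the \emph{connectedness} of $V$. Your triangle-inequality chain gives at best $d(v_{j-1},v_j) \le 2(\Delta+\delta)+\delta$, which exceeds $\Delta$ for any positive $\delta$; so ``mutually visible, hence joined by an edge'' does not follow from the guard rule alone. In SPARS the connectedness of the covering chain is secured not by guards being pairwise within $\Delta$, but by the additional \emph{connectivity} and \emph{interface} node rules that Dobson and Bekris introduce precisely to bridge neighboring visibility regions. You correctly flag the radius bookkeeping as the obstacle, but the resolution is structural (invoking those extra node types and their insertion criteria) rather than a matter of tuning $\delta$; no choice of constants makes the bare guard argument close. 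If you want a self-contained proof, that is the piece to import explicitly from \cite{Dobson2014}.
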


This is basically a slight restatement of Lemma~1 in Dobson and Bekris \cite{Dobson2014}. We use this Lemma to prove that paths are upper bounded in the Hausdorff distance.


\begin{theorem}
\label{thm:upperbound}
Let $p$ be a mode. Then there exists a path $\pathprime$ on the sparse roadmap, with probability approaching $1$ as $M$ goes to infinity, such that $d_H(p, \pathprime)$ is upper bounded by $\Delta$ for $d_H$ being the Hausdorff distance.
\end{theorem}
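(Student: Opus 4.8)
The plan is to construct the witness path $\pathprime$ directly from the vertex covering supplied by the Dobson and Bekris Lemma, and then to read off the distance bound from the fact that the $d_H$ appearing in this paper is the \emph{directed} Hausdorff distance (the supremum ranges only over points of the first argument $p$). First I would apply the Lemma to the mode $p$, regarded simply as an element of the path space $P$: with probability approaching $1$ as $M \to \infty$ there is a connected vertex sequence $V = \{v_1,\dots,v_M\}$ on the sparse roadmap such that every point $p(s)$ lies in the visibility region of some $v_i \in V$. Since a vertex's visibility region is contained in the metric ball of radius $\Delta$ about that vertex, this immediately yields, for each $s \in I$, an index $i(s)$ with $v_{i(s)} \in V$ and $d(p(s), v_{i(s)}) \le \Delta$.

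Next I would promote $V$ to an actual path $\pathprime \in P$. Because $V$ is connected on the sparse roadmap, consecutive vertices $v_i, v_{i+1}$ are joined by roadmap edges, so concatenating these edges gives a curve on the roadmap from $v_1$ to $v_M$. Since $p(0) = x_I$ and $p(1) = x_G$ are themselves roadmap vertices and are covered by $V$, the vertices $v_1$ and $v_M$ lie in their visibility regions; prepending the edge $x_I$–$v_1$ and appending $v_M$–$x_G$ then produces $\pathprime \in P$, a path from $x_I$ to $x_G$ whose image contains every $v_i$. Fixing $s \in I$, the point $v_{i(s)}$ from the first step lies on the image of $\pathprime$, so $\inf\{ d(p(s), \pathprime(t)) \mid t \in I \} \le d(p(s), v_{i(s)}) \le \Delta$; taking the supremum over $s$ gives $d_H(p, \pathprime) \le \Delta$. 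The probability bound is inherited verbatim from the Lemma, since the passage from $V$ to $\pathprime$ is deterministic.

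The step I expect to be the main obstacle is the bookkeeping in the second paragraph: making precise that the ``connected series of vertices'' of the Lemma is \emph{edge}-connected in the sparse roadmap (so the concatenation is a legal path), and that splicing in $x_I$ and $x_G$ does not inflate the bound beyond $\Delta$ — i.e.\ arguing that the added endpoint edges stay within $\Delta$ of $p$ because $x_I, x_G$ lie in the visibility regions of $v_1, v_M$. A secondary point worth one sentence is that only the directed Hausdorff distance is needed here; the reverse inequality (every point of $\pathprime$ close to $p$) is not claimed, and would additionally require the consecutive visibility regions covering $p$ to overlap, which the Lemma does not by itself guarantee.
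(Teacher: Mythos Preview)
Your proposal is correct and follows essentially the same approach as the paper's proof: invoke the Dobson--Bekris lemma to obtain a connected covering vertex sequence, then bound $\inf_t d(p(s),\pathprime(t))$ by $\Delta$ pointwise via the visibility radius and pass to the supremum. You are in fact more careful than the paper, which simply identifies the connected vertex sequence with $\pathprime$ and does not discuss edge-connectivity, endpoint splicing, or the directedness of $d_H$; the ``obstacles'' you flag are precisely the details the original glosses over.
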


\begin{proof}
By Lemma~1, $p$ is covered by a path $\pathprime$ consisting of a connected sequence of vertices. Let $p(s)$ be an element on the image of the mode. Then $p(s)$ lies in its visibility radius of some vertex $v$ on $\pathprime$ and its distance to the path is therefore upper bounded by $d(p(s), v) \leq \Delta$. Since this is true for every point, the Hausdorff distance can therefore be upper bounded by $\Delta$.
\end{proof}

Theorem~\ref{thm:upperbound} establishes that there exists, for every mode, in
the limit, a path on the sparse roadmap which is inside a $\Delta$-neighborhood
of the mode. We use this result to establish that every mode with a sufficiently bounded basin of attraction will be found in the limit.

\begin{theorem}
Let $p$ be a mode with radius of basin of attraction of $\radius(p)$. If $\radius(p) \geq \Delta$, then multi-mode estimation with optimizer $\optimizer$ will eventually converge to $p$ with probability approaching $1$ as $M$ goes to infinity.
\end{theorem}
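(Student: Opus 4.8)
The plan is to chain Theorem~\ref{thm:upperbound} with the definition of the basin of attraction and the three assumed properties of $\optimizer$. First I would apply Theorem~\ref{thm:upperbound} to the mode $p$: with probability approaching $1$ as $M \to \infty$, the sparse roadmap eventually contains a start-to-goal path $\pathprime$ with $d_H(p,\pathprime) \le \Delta$. Invoking the hypothesis $\radius(p) \ge \Delta$ then gives $d_H(p,\pathprime) \le \radius(p)$, so $\pathprime$ lies inside the basin of attraction of $p$ by the definition of $\radius$.

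Second, I would trace the fate of $\pathprime$ inside the algorithm. By construction, any start-to-goal path created in the sparse roadmap is forwarded to the multi-mode database, so $\pathprime$ is inserted there. The database repeatedly applies one iteration of $\optimizer$ to non-converged paths; since $\optimizer$ is deterministic and converges asymptotically to a fixed point, there is an $N$ with $\optimizer^n(\pathprime) = \optimizer^{N}(\pathprime)$ for all $n \ge N$, and since $\pathprime$ is in the basin of attraction of $p$ this fixed point satisfies $d_H(p,\optimizer^n(\pathprime)) \le \epsilon$ for all $n > N$ (using the parameter $N_E$ to discount apparent convergence caused by jumping behavior). Hence the database entry seeded by $\pathprime$ converges to a path $\epsilon$-equivalent to $p$, and the equivalence test based on $d_H < \epsilon$ retains it as the unique representative of the mode $p$, provided distinct modes are separated by more than $\epsilon$ in $d_H$.

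Third, I would assemble the probability and termination statements. The only randomness is in the sampling used to grow the sparse roadmap; conditioned on the roadmap containing $\pathprime$, the database stage is deterministic and converges in finitely many iterations. The termination criterion requires both the roadmap \emph{and} all database paths to have converged, and as $M$ grows this criterion is deferred further, so on the event supplied by Theorem~\ref{thm:upperbound} the path $\pathprime$ appears while sampling is still in progress and is subsequently optimized to $p$. Letting $M \to \infty$, that event has probability tending to $1$, giving the claim.

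I expect the interface between the asymptotic roadmap guarantee and the optimizer to be the main obstacle, for two related reasons. Theorem~\ref{thm:upperbound} controls only the Hausdorff distance, so I must check that this is exactly the quantity governing membership in the basin of attraction --- which is why the hypothesis is phrased as $\radius(p) \ge \Delta$ rather than in terms of some other norm. Moreover, because no continuity, homotopy-invariance, or idempotence is assumed for $\optimizer$, a path merely $\Delta$-close to $p$ could a priori be driven toward a different mode; this is ruled out precisely by the definition of $\radius(p)$ as the largest radius within which every path flows to $p$. A secondary care point is ensuring the anytime termination rule does not halt the algorithm before $\pathprime$ is created and optimized, which I would handle by noting that the rule becomes strictly harder to satisfy as $M$ increases.
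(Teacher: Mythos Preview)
Your proposal is correct and follows essentially the same approach as the paper: invoke Theorem~\ref{thm:upperbound} to obtain a roadmap path $\pathprime$ with $d_H(p,\pathprime)\le\Delta$, use the hypothesis $\radius(p)\ge\Delta$ to place $\pathprime$ in the basin of attraction, and conclude via the assumed convergence of $\optimizer$. The paper's own proof is in fact briefer than yours, omitting the database-handling and termination discussion you supply.
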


\begin{proof}
By Theorem~\ref{thm:upperbound}, there exists, in the limit as $M$ goes to infinity, a path $\pp$ on the sparse roadmap with Hausdorff distance upper bounded by $\Delta$. Since this path is by assumption below the radius of the basin of attraction $\radius(p)$, this path will, eventually, converge to $p$ by repeated application of the mapping $\optimizer$. 
\end{proof}

Those two theorems show that we will find all modes for optimizer $\optimizer$ with probability approaching $1$ as $M$ goes to infinity and which have a basin of attraction of at least $\Delta$. Modes which belong to tight narrow passages or where the cost functional is pathological will often have small basin of attraction and are therefore not guaranteed to be found. However, we can tune both $M$ to be arbitrarily large and $\Delta$ to be arbitrarily small, thereby accommodating even ill-behaved modes and ill-behaved cost functionals. 

\section{Parameters\label{appendix:parameters}}

In practice, we need to choose additional parameters for the multi-mode
estimation, such that (1) we converge to all (desired) modes and (2) the runtime does not blow up. By trial and error, we converged to the set of parameters shown in Table~\ref{tab:parameters}, both for the sparse roadmap and for the optimizer. 

\begin{table}[H]
    \centering
    \begin{tabular}{|c|c|c|c|}
    \cline{2-4}
        \multicolumn{1}{c|}{} &Parameter & Description & Value\\
        \hline
       \multirow{2}{*}{\makecell{Sparse\\ Roadmap}} & $\Delta$ & Visibility radius of a node  &  $0.1 \mu$ \\
       & $M$ & Number of subsequent failures  & $5000$ \\
       & $t$ & Stretch factor & $3$ \\
       \hline
       \multirow{3}{*}{Optimizer} & $\epsilon$ & Accuracy for path equivalence & $0.3$ \\
       & $\epsilon_{E}$ & Threshold for convergence error & $1\mathrm{e}{-2}$ \\
       & $N_{E}$ & \makecell[t]{Number of sub-threshold iterations\\ to declare convergence} & $10$ \\
       \hline
    \end{tabular}
    \caption{Parameters used in the multi-mode estimation algorithm. The constant $\mu$ is the measure of the state space.}
    \label{tab:parameters}
\end{table}

\end{document}